\DeclareMathOperator*{\argmax}{arg\,max}
\newtheorem{proposition}{Proposition}
\newtheorem{example}{Example}
\newcommand*{\rom}[1]{\expandafter\@slowromancap\romannumeral #1@}
\newcommand\independent{\protect\mathpalette{\protect\independenT}{\perp}}
\def\independenT#1#2{\mathrel{\rlap{$#1#2$}\mkern2mu{#1#2}}}
\newcommand{\be}{\begin{equation}}
\newcommand{\ee}{\end{equation}}
\newcommand{\Ac}{\mathcal{A}}
\newcommand{\Dc}{\mathcal{D}}
\newcommand{\Xc}{\mathcal{X}}
\newcommand{\Zb}{\bm{Z}}
\newcommand{\Xb}{\bm{X}}
\newcommand{\xb}{\bm{x}}
\newcommand{\zb}{\bm{z}}
\newcommand{\eb}{\bm{e}}
\newcommand{\sbb}{\bm{s}}
\DeclareMathOperator{\E}{\mathbb{E}}
\newcommand{\xt}{\tilde{\bm{x}}}
\newcommand{\Xt}{\tilde{\bm{X}}}
\newcommand{\rev}[1]{{\color{black}#1}}
\else \newcommand{\rev}[1]{#1} \fi
\newcommand{\mahed}[1]{\sethlcolor{green}} 
\begin{document}

%\title{Safety Aware Personalized Policy Construction Using Logged Data with Missing Values}
\title{Conservative Policy Construction Using Variational Autoencoders for Logged Data with Missing Values}

\author{Mahed Abroshan, Kai Hou Yip, Cem Tekin, Senior Member, IEEE, and Mihaela van der Schaar, Fellow, IEEE\thanks{M. Abroshan is with the Alan Turing Institute, London, UK, (mabroshan@turing.ac.uk). K. Yip is with University College London (kai.yip.13@ucl.ac.uk), C. Tekin is with Bilkent University (cemtekin@ee.bilkent.edu.tr), and M. van der Schaar is with University of Cambridge, Alan Turing Institute, and University of California Los Angeles, (mv472@cam.ac.uk).}
}
%\date{\today}

\maketitle

\begin{abstract}
In high-stakes applications of \rev{data-driven decision making} like healthcare, it is of paramount importance to learn a policy that maximizes the reward while avoiding potentially dangerous actions when there is uncertainty. %Since online exploration is not possible due to the critical nature of such applications, the only resource that can be used to learn the policy is an imperfect logged dataset with missing values in the attributes of features and no counterfactuals. 
\rev{There are two main challenges usually associated with this problem. Firstly, learning through online exploration is not possible due to the critical nature of such applications. Therefore, we need to resort to observational datasets with no counterfactuals. Secondly, such datasets are usually imperfect, additionally cursed with missing values in the attributes of features.}
In this paper, we consider the problem of constructing personalized policies using logged data when there are missing values in the attributes of features in both training and test data. The goal is to recommend an action (treatment) when $\Xt$, a degraded version of $\Xb$ with missing values, is observed. We consider three strategies for dealing with missingness. In particular, we introduce the \textit{conservative strategy} where the policy is designed to safely handle the uncertainty due to missingness. In order to implement this strategy we need to estimate posterior distribution $p(\Xb|\Xt)$, we use variational autoencoder to achieve this. In particular, our method is based on partial variational autoencoders (PVAE) which are designed to capture the underlying structure of features with missing values.\\

\textit{Index Terms}---Missing values, observational data, policy construction, variational autoencoder.  % In the experiment section we show that our method outperforms competitors. %and suggest two competing methods for recommending actions. Both of our methods are based on partial variational autoencoders (PVAE), a method that is recently introduced for imputation \cite{ma2018eddi}. We show that our end to end PVAE works better than our statistical inference method and existing benchmarks.
\end{abstract}

\section{Introduction}

In many real-life applications, the datasets suffer from various forms of imperfection. Missingness in the attributes of the features is one of the most common types of imperfection \cite{little2012}. In the problem of constructing policies when there are missing values, one can simply use an imputation method to fill out missing attributes and then use one of the many existing approaches for policy recommendation for the complete dataset. However, this does not reflect the uncertainty in the features. Multiple imputations \cite{rubin2004} can be used instead of single imputation. In order to combine the recommended actions of different imputed instances, one simple idea is to use the mode of actions, another possibility is to use a stochastic policy where the probability of choosing an action is proportionate to its frequency. In this work, we address this problem in a systematic way. We suggest using a generative model, partial VAEs (PVAE) \cite{ma2018eddi}, to estimate the probability of different imputed features and use these probabilities as the scores of recommended actions for each particular complete feature. An advantage of using VAEs is that they make weak assumptions about the way the data is generated \cite{kingma2014,rezende2014}. Also, it has been shown that they are very effective in capturing the latent structure and the correlations among variables in several tasks \cite{louizos2017causal,gregor2015draw,ma2018eddi,rezende2016unsupervised}. Using posterior probabilities produced by PVAE, we can estimate the action that maximizes the expected reward. However, simply maximizing expected reward, given that we have uncertainty about the true feature, might be problematic in sensitive applications like healthcare, since the chosen action that maximizes expected reward may impose poor reward in some of the less likely scenarios which is not safe. To address this, we suggest using \textit{conservative strategy} for policy recommendations. With this strategy, we consider all likely scenarios (we can choose how prudent we need to be via a tuning parameter) and recommend an action that maximizes the reward in the worst-case scenario (a max-min criterion).

The main factor which differentiates the problem of learning from observational data from supervised learning is that for each feature the reward is only known for the prescribed action, i.e. we do not know the counterfactuals. Another complicating factor is that the logging policy (aka propensity score) is usually not random, hence we need to deal with the selection bias. In addition to these two issues, in this work, we are considering that features have missing attributes. Note that, as a consequence of this, we not only do not know counterfactuals but also no longer have access to the actual reward for a given action and complete feature. The goal of this work is to address how one can deal with the uncertainty imposed from the missing attributes. Note that there are other sources of uncertainty in the problem that we are leaving for future work. In particular, here we are using inverse propensity score (IPS) for dealing with selection bias---a method that is known to have high variance (especially when there are not enough samples for actions with low propensity scores) \cite{ionides2008truncated,batch2015}. Here, we are not considering this uncertainty and implicitly assume that there are enough samples to have a low variance estimate of the propensity score.

In summary, our contribution is as follows. We propose using a max-min criterion (conservative strategy) when there are missing values in the attributes for sensitive applications.  We are proposing a new method based on VAEs for handling missing attributes in the counterfactual estimation problem. In one of our methods we use the VAE to produce a similarity score to determine how much each of the samples should contribute to the estimation of the outcome for the sample in hand. In the other method, we use a conditional VAE setup to directly estimate the reward via the network. We are using the inverse propensity score for dealing with the selection bias.
\newline
\textbf{Notation:} We use capital letters for random variables, lowercase for realization, boldface letters for vectors, and calligraphic letters for denoting sets.
\section{Problem Definition and Related Work}\label{sec:problem}
The feature $\xb$ is a $d$-dimensional vector belonging to the set $\Xc=\Xc_1\times\Xc_2\times\cdots\times\Xc_d$. Here $\Xc_i$ can be a set of continuous, integer, or categorical variables. Define $\tilde \Xc_i= \Xc_i\cup \{*\}$, now the observed vector with missing attributes $\xt$ belongs to $\tilde \Xc=\tilde\Xc_1\times\tilde\Xc_2\times\cdots\times\tilde\Xc_d$. Define binary vector $\bm{M}$ which determines the missingness pattern. $M_i=0$ means that $i$th element is observed and $M_i=1$ otherwise. We assume missing at random (MAR) mechanism for missingness. \rev{This means that the probability of a value to be missing may only depend on the observed data} (see \cite{rubin1976inference} for exact definition). % The missingness pattern can depend on $\Xb$ but is independent of $A$ given $\Xt$. %We can assume that $\xt$ is produced by a conditional distribution $p(\tilde{\bm{X}}\mid \bm{X})$ (which is not known), considering a general conditional distribution is an interesting problem for future work. 
For each observed covariate $\xt$, we can recommend an action $a\in \Ac$ where $\Ac$ is a finite set (note that we are not restricting actions to be binary). The reward $R$ given action $a$ and true feature $\xb$ is drawn from an unknown distribution $R \sim \Phi(R \vert \xb, a)$. We denote $\mathbb{E}[R|\xb,a]$ by $\theta(\xb,a)$. The available dataset are triplets of $(\Xt_i,A_i,R_i)$:
\be \Dc^n=\{(\Xt_1,A_1,R_1),\cdots,(\Xt_n,A_n,R_n)\},\ee
where actions $A_i$ are produced from an unknown logging policy $\pi_0(A|\Xt)$ (also called generalized propensity score). Note that we assume that the treatments in the dataset are administered by only observing $\Xt$, hence Fig. \ref{fig:causal} represents the causal graph that describes the problem. The conditional distribution of these variables are given in \eqref{eq:dist}. With some abuse of notation the joint distribution is written as $(\Xb,\Xt,A,R)\sim p(\Xb,\Xt,A,R)$.
\begin{align}
\begin{split}
\Xb\sim \mu(\Xb), &\hspace{2mm} \Xt\sim p(\Xt|\Xb), \\ 
A\sim \pi_0(A|\Xt), &\hspace{2mm} R\sim \Phi(R|\Xb,A). \label{eq:dist}
\end{split}
\end{align}

\begin{figure}
\centering
\begin{tikzpicture}[scale=0.8]
\draw (2.5,4) node[rectangle,draw](X){$\Xb$}; 
\draw (0,4) node[rectangle,minimum width=0.5cm,draw](Y){$\Xt$};
\draw (0,1) node[rectangle,draw](A){A};
\draw (2.5,1) node[rectangle,draw](R){R};
\draw (4,2.5) node[rectangle,draw,text centered](R1){\footnotesize{$\{R(a)\}_{a\in \mathcal A}$}};
\draw (-2.5,4) node[rectangle,draw](M){M};

\draw [->] (X) edge (Y) (Y) edge (A) (A) edge (R) (X) edge[bend left] (R1) (R1) edge[bend left] (R) (M) edge (Y);

\end{tikzpicture} 
\vspace{0.3cm}
\caption{The causal model for noisy observation problem} 
\label{fig:causal}
\end{figure}
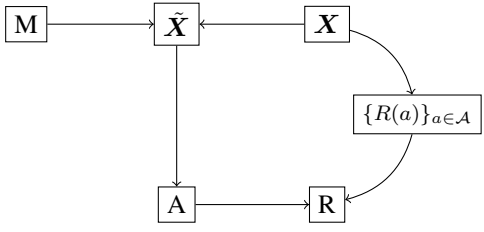
We make the two standard assumptions about the logging policy and rewards in the potential outcome framework \cite{rubin1974estimating,rubin2005causal}:
\begin{enumerate}
\item Common support: $\pi_0(a|\xt)>0$ for all $a\in \Ac$ and $\xt\in \tilde{\Xc}$.
\item Unconfoundedness with missing values: For each feature vector $\Xb$ the set of possible rewards $\{R(a)\}_{a\in \Ac}$ are statistically independent of the taken action: $\{R(a)\}_{a\in \Ac}\independent A\vert \Xt$.
\end{enumerate}
Note that the second assumption can be inferred from our causal graph. This is  required for using generalized propensity score $\pi_0(A|\Xt)$ as we do in Section \ref{sec:methods} \cite{rosenbaum1984reducing}.
The missing data problem frequently arises in machine learning. A motivating example for our model is the following, assume a medical setting where at the time that the treatment was administered to the patient, some of the attributes were missing. This can happen for various reasons, for example, maybe because of the different practices across different hospitals (where some of the attributes are not recorded), lack of certain tests, or maybe emergency situations to name a few. Note that since the treatment was administered only by observing $\Xt$ the causal model of Fig. \ref{fig:causal} holds. 
\subsection{Related Work}
We are considering the problem of off-policy evaluation (also known as offline evaluation in bandit literature). Here, we are using the IPS reweighting method \cite{horvitz1952,central1983} to deal with selection bias. We are using IPS in a deep network model, from this point of view our work is mostly related to \cite{atan2019constructing,joachims2018deep}. Direct method is another method for counterfactual estimation where the goal is to learn a function, mapping pairs of actions and features to rewards \cite{prentice1976use}. Doubly robust method combines the former two approaches \cite{robins1994estimation,dudik2011doubly}. Note that none of these works consider missing attributes in the feature. In \cite{hoiles2016bounded}, authors provide an off-policy evaluation method based on the regression estimator given in \cite{li2015toward,yoon2016}, when there are missing pairs of action and feature in the dataset. However, they do not consider missing attributes in the feature.

The treatment effect estimation problem is another related line of work, where the goal is to find the causal effect of a certain intervention (treatment) on the population or on individuals. The missing value problem is discussed in this framework from early on \cite{rosenbaum1984reducing,little2019statistical}. The use of generalized propensity scores, computed via multiple imputations is suggested in \cite{d2000estimating}. A summary of several early works can be found in \cite{hill2004reducing}. More recently, in \cite{kallus2018causal} matrix factorization has been proposed for estimating confounders from noisy covariates (also includes covariates with missing values). In \cite{mayer2020doubly} a doubly robust based method is suggested. In \cite{parbhoo2020information}, authors consider missing values only during test time and suggest a method based on information bottleneck technique. Finally, \cite{mayer2020missdeepcausal} suggests a new method based on VAEs (adopted for missing values) which learns distribution of the latent confounder and hence assumes a weaker condition than unconfoundedness with missing values which is harder to justify. In the experiment section, we compare our results with several of these recent works. Some of the other notable works in the treatment effect literature are \cite{shalit2017,wager2018,ganite2018,louizos2017causal}. However, they do not consider missing values. Thus, we will compare our results with \cite{louizos2017causal}, by imputing the missing values to get complete features and train and use the algorithm on the complete feature.
%One can think of $\Xt$ as a proxy for the true feature $\Xb$. This perspective, connects our problem to a line of works that studying inference using proxies. In particular, \cite{louizos2017causal} also uses VAE for finding joint distribution and handling hidden confounders. We want to highlight that the causal model in \cite{louizos2017causal} is different from what we consider here (see Figure \ref{fig:causal}). The hidden variable there is a confounder that affects both treatment and the proxy variable (unlike our model where we assume there is no hidden confounder).
%In particular, we use the counterfactual estimator of \cite{ganite2018} as a benchmark in the experiment section. 

The problem studied in this work can be considered as an offline version of contextual bandits problem \cite{atan2018global,wang2017optimal,swaminathan2017off}. There are several works in bandit literature (and more generally in reinforcement learning) that are related to conservative strategy, e.g. \cite{wu2016conservative,garcia2015comprehensive,srinivasan2020learning}. %However, their setup is different from ours. 
The goal in bandit literature is to minimize regret, and conservatism in this area means guaranteeing that we are not performing poorly in the process of achieving a low regret (exploration). %achieving this (which typically needs exploration) while making sure that we are not losing (i.e. exploiting enough). 
This is fundamentally different from conservatism in our problem which is due to uncertainty in the feature. 

Our method is based on PVAE introduced in \cite{ma2018eddi}. A few other VAE based methods are also suggested for the imputation task  \cite{kingma2014,mattei2019miwae}.  %and \cite{nazabal2018handling} uses zero imputation. And  \cite{}}. It has been shown that the partial VAE method introduced in \cite{ma2018eddi} outperforms many of the existing imputation methods in several tasks. Both of our two suggested methods are also based on PVAEs. 
Methods based on PVAE has been suggested for other tasks. In \cite{ma2018hybrid}, they use PVAE for hybrid recommender system and in \cite{icebreaker} for element-wise training data acquisition. 
%\com{Can we include a table like Table that compares our work with the related works. We should especially highlight the novelty of our work.}

\section{Strategies for Finding the Best Action}\label{sec:strategies}
In this section, we discuss three different strategies for action recommendation when there is uncertainty (in our problem due to missing attributes) in the features. \rev{In the next section, we address all three of these strategies using two different methodologies.} 

Assume that for feature $\xb$, the action that gives the highest expected reward is denoted by $a(\xb)$,
\be a(\xb)= \argmax_a \theta(\xb,a).\ee 
%\subsection{Fundamental Uncertainty Imposed by Missing Attributes}
Since we observe $\xt$, the degraded version of $\xb$, there is uncertainty in the the true value of $a(\xb)$. This uncertainty can be quantified using Shannon entropy $H(a(\Xb)|\Xt=\xt)$. The following proposition presents an expansion for this quantity. 
\begin{proposition}
The uncertainty in the best action $a(\Xb)$ when we observe $\Xt\sim p(\Xt|\Xb)$ can be expressed as follows:
\be H(a(\Xb)|\Xt)= H(a(\Xb)) - (I(\Xb;\Xt) - I(\Xb;\Xt|a(\Xb)).\ee
\end{proposition}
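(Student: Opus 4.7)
The plan is to reduce the claim to a standard two-way application of the chain rule for mutual information. Write $A := a(\Xb)$ for brevity, and fix a (measurable) tie-breaking convention in the $\argmax$ so that $A$ is a deterministic function of $\Xb$. This is the one structural fact I will need to exploit.

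By definition of conditional entropy, $H(A|\Xt) = H(A) - I(A;\Xt)$, so proving the proposition reduces to showing
\[
I(A;\Xt) \;=\; I(\Xb;\Xt) \;-\; I(\Xb;\Xt\mid A).
\]
To get this, I would expand $I(\Xb,A;\Xt)$ in two different orders via the chain rule:
\[
I(\Xb,A;\Xt) \;=\; I(\Xb;\Xt) + I(A;\Xt\mid \Xb) \;=\; I(A;\Xt) + I(\Xb;\Xt\mid A).
\]
Because $A$ is a function of $\Xb$, conditioning on $\Xb$ already determines $A$, so $I(A;\Xt\mid \Xb) = 0$. Equating the two expansions and substituting back into $H(A|\Xt) = H(A) - I(A;\Xt)$ yields the claimed identity.

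The main thing to be careful about — really the only place the argument could slip — is the assertion that $A = a(\Xb)$ is a bona fide deterministic function of $\Xb$, since the $\argmax$ may not be unique. Once a measurable selection rule is fixed (e.g., the smallest index attaining the maximum, which is the natural convention when $\Ac$ is finite as assumed), $A$ is measurable with respect to the $\sigma$-algebra generated by $\Xb$, and the vanishing of $I(A;\Xt\mid \Xb)$ is immediate. The rest of the computation is a purely mechanical manipulation of Shannon information quantities and does not require any use of the MAR assumption or of the causal structure in Fig.~\ref{fig:causal} beyond the joint distribution defined in \eqref{eq:dist}.
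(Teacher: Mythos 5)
Your proof is correct and takes essentially the same route as the paper's: both are chain-rule manipulations of Shannon quantities whose only substantive ingredient is that $a(\Xb)$ is a deterministic function of $\Xb$ (the paper uses $H(a(\Xb)\mid\Xb)=0$ to expand the joint entropy $H(a(\Xb),\Xb,\Xt)$ in two ways, while you use the equivalent fact $I(a(\Xb);\Xt\mid\Xb)=0$ to expand $I(\Xb,a(\Xb);\Xt)$ in two ways, which is the same bookkeeping in mutual-information form). Your remark about fixing a measurable tie-breaking rule for the $\argmax$ is a reasonable precaution that the paper leaves implicit.
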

%In this expression, $H(a(\Xb)|\Xt)$ represents the uncertainty in $a(\Xb)$ when we observe $\Xt$. 
\begin{proof}
See the proof in the appendix.
% By expanding $H(a(\Xb),\Xb,\Xt)$ in two different ways we have 
% \begin{align*}
% H(a(\Xb),\Xb,\Xt)&=H(\Xb,\Xt)\\
% &=H(\Xt)+H(a(\Xb)|\Xt)+H(\Xb|\Xt,a(\Xb))
% \end{align*}
% Therefore, $H(\Xt,\Xb)=H(\Xt)+H(\Xb|\Xt)=H(\Xt)+H(a(\Xb)|\Xt)+H(\Xb|\Xt,a(\Xb))$, hence
% \begin{align*}
% H(a(\Xb)|\Xt)&=H(\Xb|\Xt)-H(\Xb|\Xt,a(\Xb))\\
% &=H(\Xb)-I(\Xb;\Xt)-H(\Xb|\Xt,a(\Xb))\\
% &=I(\Xb;\Xt,a(\Xb))-I(\Xb;\Xt)\\
% &=I(\Xb;a(\Xb))-I(\Xb;\Xt)+I(\Xb;\Xt|a(\Xb))\\
% &=H(a(\Xb)) - (I(\Xb;\Xt) - I(\Xb;\Xt|a(\Xb))
% \end{align*} 
% In the above equations we used the fact that $H(a(\Xb)|\Xb)=0$.
\end{proof}
The first term of the right hand side, $H(a(\Xb))$, represents the uncertainty in $a(\Xb)$ itself. This can be interpreted as the complexity of the function $a(.)$. For example, in the extreme case when there is a single action which is always the best action, then $H(a(\Xb))=0$. The second term $I(\Xb;\Xt)$ is the mutual information between $\Xb$ and $\Xt$ which represents the quality of the channel between these two variables, this channel is characterized by the conditional distribution $p(\Xt|\Xb)$, i.e., $I(\Xb;\Xt)$ shows how much information is passed to $\Xt$ from $\Xb$. The last term is subtracting the amount of information passed to $\Xt$ that is irrelevant to $a(\Xb)$. The probability that the best algorithm find $a(\Xb)$ by observing $\Xt$ is given by $\frac{1}{2^{H(a(\Xb)|\Xt)}}$. A simple example is given in appendix for which we compute these quantities, we leave further discussion about fundamental limits to future work. We reiterate that in this work we ignore the uncertainty in the true value of reward, i.e., the uncertainty in the estimation of $\theta(\xb,a)$. \rev{The above analysis holds for any degradation of the input. In particular, in this work we are considering missingness. The three strategies below can be used to deal with this type of uncertainty.}

\textbf{Imputation:} The first strategy is to use an imputation algorithm in order to find the most likely feature $\xb$ given the observed incomplete feature $\xt$, i.e.
\be \hat \xb=\argmax_{\xb} p(\xb|\xt). \label{eq:max:x}\ee
Then the recommended action $a_I(\xt)$ can be found by maximizing the reward for $\hat \xb$.
\be a_I(\xt)=a(\hat{\xb})=\argmax_a \, \theta(\hat{\xb},a).\label{eq:imp:best} \ee

\textbf{Maximum expected reward:} The imputation strategy recommends the action only \rev{based on one possible complete feature. This does not account for the uncertainty in the true feature.} A natural way is to directly maximize the expected reward instead of finding a single potential complete feature. Assuming that attributes are discrete and $\vert\Xc\vert$ is finite (the summation below should be replaced with an integral if this is not the case), the expected reward when $\xt$ is observed for a given policy like $\pi(A|\Xt)$ can be computed:
\begin{align*}
\mathbb E_{\pi}(R(\xt))&=\sum_{a,\xb} \theta(\xb,a)\pi(a|\xt) p(\xb|\xt),\\
&=\sum_{a} \pi(a|\xt)\sum_{\xb}\theta(\xb,a) p(\xb|\xt).
\end{align*} 
The policy $\pi$ that maximizes this expectation is a deterministic policy that recommends $a_M(\xt)$ defined as follows:
\be a_M(\xt)=\argmax_a \sum_{\xb} \theta(\xb,a) p(\xb|\xt).\label{eq:best}\ee
Multiple imputation method (MI) is an approximation for this strategy, \rev{where we} consider several possible complete features and recommend an action which maximizes the average reward over the imputed samples. This method is widely used in the literature  (e.g. see \cite{hill2004reducing,mattei2009estimating,mayer2020missdeepcausal}).%We will see later in Section \ref{sec:exp} that multiple imputation (MI) is similar to this strategy (although it really depends on how one combine the output of MI). 

\textbf{Conservative strategy:} \label{sec:conserv} In sensitive applications, the strategies presented above may not be acceptable, because in these applications we have to avoid less likely (but still possible) scenarios for which a very low reward is expected (e.g. death in healthcare application). To achieve this, we suggest a max-min criterion that recommends the action which maximizes the reward in the worst case scenario which is likely ``enough'':
\be a_C(\xt)=\argmax_a \hspace{1mm} \min_{\xb:p(\xb|\xt)>cp(\hat \xb|\xt)} \theta(\xb,a). \label{eq:conservative} \ee
Here the constant $0\leq c<1$ determines how prudent we want to be ($\hat x$ is defined in \eqref{eq:max:x}). If we choose $c=0$, we get the most conservative policy, where we essentially ignore observed input $\xt$, and choose the action which has the highest minimum reward for all inputs, while $c\to 1$ is equivalent to the imputation method.

\rev{\textbf{Remark. } If we define $R=\int_S p(\xb|\xt) d\xb$ where $S=\{\xb \, |\, \, p(\xb|\xt)< c p(\hat \xb|\xt)\}$, then $R$ is representing the risk of not considering the true feature in the process of recommending the action. When we choose $c=0$, we have $R=0$, and it increases with $c$. The parameter $c$ then can be thought of as a tuning parameter for this risk. As a proxy, we can model $p(\xb|\xt)$ with a multivariate Gaussian distribution, and can consider $\hat \xb$ to be the center of the distribution. Thus, we can compute this risk for a given $c$.} 
% We use VAEs to find $p(\xb|\xt)$.
%\textbf{Using BO:} We can implement this method using Bayesian optimization. For finding minimum of $\theta (\xb,a)$ for a given $a$ over all $\xb$'s which satisfy $p(\xb|\xt)>cp(\hat \xb|\xt)$. We perform a Bayesian optimization algorithm assuming a Gaussian prior for $\theta$.
\section{Estimation Methods}\label{sec:methods}
In this section, we suggest two methods for counterfactual estimation. We show how we can implement the three strategies discussed in the previous section using these two methods. In the core of both methods, we use PVAE.
%In both of these methods we use partial VAEs for recommending actions.  %finding the expected rewards for action $a$ and context $\xb$ i.e. for $E[\Phi(R|\xb,a)]$. The comparison of the methods can be found in the experiment section. 
In the first method, we train the network using only $\xt$ as the input, which will produce a similarity score for two features. We will use this similarity score to estimate the reward (we call this method SPVAE). In the second method (called CPVAE), we train a conditional VAE \cite{sohn2015learning} using $\xt$ incomplete context, and the reward (conditioned on action), and we will use the network for both estimating $p(\xb|\xt)$, and also estimating the expected rewards $\theta(\xb,a)$.
\begin{figure*}[t] 
\centering
\begin{minipage}{.5\textwidth}
\centering
\begin{tikzpicture}
\draw [fill=white!93!blue](-1,2) rectangle (0,2.5);
\node at (-0.5,2.25){$x_{1}$};
\draw [fill=white!93!blue](-1,2.75) rectangle (0,3.25);
\node at (-0.5,3){$\eb_{1}$};
\draw[->] (1,2.75) -- (1.5,2.75);
\draw[->] (0,3) -- (.4,2.75);
\draw[->] (0,2.25) -- (.41,2.65);

\draw [fill=white!93!blue](.7,2.75) circle (.3cm);
\node at (.7,2.75){$\times$};
\draw [fill=white!70!blue,rounded corners=5pt](1.5,2.5) rectangle (2.5,3);
\node at (2,2.75){h};

\draw [fill=white!93!blue](.7,.25) circle (.3cm);
\node at (.7,.25){$\times$};
\draw [fill=white!93!blue](-1,-.5) rectangle (0,0);
\node at (-0.5,-0.27){$x_{\vert O\vert}$};
\draw [fill=white!93!blue](-1,0.25) rectangle (0,0.75);
\node at (-0.5,.5){$\eb_{\vert O\vert}$};
\draw[->] (1,.25) -- (1.5,.25);
\draw[->] (0,0.5) -- (.4,.25);
\draw[->]  (0,-.25)-- (.41,0.15);
\draw [fill=white!70!blue,rounded corners=5pt](1.5,0) rectangle (2.5,0.5);
\node at (2,0.25){h};

\draw[->] (2.5,2.75) -- (3,1.75);
\draw[->] (2.5,0.25) -- (3,1.25);

\draw [fill=white!93!blue](3.15,1.5) circle (.3cm);
\node at (3.15,1.5){g};
\draw[->] (3.45,1.5) -- (4,1.5);
\draw [fill=white!93!blue,rounded corners=2pt](4,0.5) rectangle (6,2.5);
\node at (5,1.65){$f$};
\node at (5,1.2){\small (NN encoder)};
\draw[->] (6,1.5) -- (6.5,1.5);
\node at (6.7,1.5){$\Zb$};

\draw [fill=black](1,2) circle (.06cm);
\draw [fill=black](1,1.5) circle (.06cm);
\draw [fill=black](1,1) circle (.06cm);

\end{tikzpicture}
\vspace{2mm}
\caption{Encoder of PVAE (PNP Setting).}
\label{fig:encoder}
\end{minipage}%
\begin{minipage}{.5\textwidth}
\centering
\begin{tikzpicture}[scale=0.65]
\draw [fill=white!93!blue,rounded corners=2pt](0,0.5) rectangle (3,3.5);
\node at (1.5,2.35){PVAE};
\node at (1.5,1.55){Encoder};
\draw[->] (-1,3.5) -- (0,3);
\node at (-1.3,3.6){$\Xt$};
\draw[->] (-1,2) -- (0,2);
\node at (-1.3,2){$A$};
\draw[->] (-1,0.5) -- (0,1);
\node at (-1.3,0.5){$\tilde{R}$};
\draw[->] (3,2.5) -- (3.5,2.5);
\node at (3.7,2.5){$\Zb$};
\draw[->] (4,2.5) -- (4.5,2.5);
%\draw[->] (3,1.5) -- (3.5,1.5);
\node at (3.7,1.5){$A$};
\draw[->] (4,1.5) -- (4.5,1.5);
\draw [fill=white!93!blue,rounded corners=2pt](4.5,0.5) rectangle (7.5,3.5);
\node at (6,2.5){\footnotesize $q(\Xb,R|\Zb,A)$};
\node at (6,1.5){(Decoder)};
\draw[->] (7.5,1) -- (8.5,1);
\node at (8.7,1){$R$};
%\draw[->] (7.5,2) -- (8.5,2);
%\node at (8.7,2){$A$};
\draw[->] (7.5,3) -- (8.5,3);
\node at (8.7,3){$\Xb$};
\end{tikzpicture}
\vspace{2mm}
\caption{CPVAE structure}
\label{fig:sub2}
\end{minipage}%
\end{figure*}
%\subsection{Statistical inference using partial VAE}

\subsection{PVAE}\label{sec:PVAE}
We will be using the encoder of partial VAE that was introduced in \cite{ma2018eddi}. In particular, we use the Pointnet Plus (PNP) setting. The structure of the encoder is represented in Fig. \ref{fig:encoder}. \rev{PVAE is designed to deal with the missingness in the input and its structure allows the input dimension to vary.} Assume that $x_{i_1},\cdots, x_{i_{\vert O\vert}}$ are the observed attributes of the feature, each observed attribute $x_{i_j}$ will be multiplied by an embedding vector $\eb_j$ that will represent the position of the observed attribute. Denote the element-wise multiplication of $\eb_j$ and $\xb_{i_j}$ by $\sbb_j=\xb_{i_j}*\eb_j$. Now $\sbb_j$'s will be fed to $h$, a shared neural net. Then there is a permutation invariant function $g$ (in our setup $g$ is a summation similar to \cite{ma2018eddi}) that maps $(h(s_1),\cdots,h(s_{\vert O\vert}))$ to $\mathbb{R}^k$ ($k$ is a hyperparameter). Finally, this $k$-dimensional latent variable will be fed to \rev{a fully connected network} $f()$. Therefore, we have $\Zb=f\left(g(h(s_1),\cdots,h(s_{\vert O\vert}))\right)$. We refer to \cite{ma2018eddi} for a more detailed discussion about the encoder. 

For the decoder of PVAE we use \rev{a fully connected network}. Inspired from the decoder in \rev{the} HI-VAE model \cite{nazabal2018handling}, we consider the following distributions for different type of variables and map $\Zb$ \rev{to the parameters of an appropriate distribution.} This will enable us to handle heterogeneous features of the context. For continuous variables we have $p(x_i|\Zb)=\mathcal{N}(\mu_i(\Zb),\sigma_i(\Zb))$, where $\mu_i(\Zb)$ and $\sigma_i(\Zb)$ are outputs of the neural network with input $\Zb$. %For positive real-valued attributes, $p(x_i|\Zb)=\log\mathcal{N}(\mu_i(\Zb),\sigma_i(\Zb))$. 
For categorical attributes, we use one-hot encoding, the posterior distribution is given by a softmax $p(x_i=j|\Zb)=\frac{\text{exp}^{-s_j(\Zb)}}{\sum_{t=1}^m \text{exp}^{-s_t(\Zb)}}$, where $s_t(\Zb)$ is the output of the decoder corresponding to the $t$th category.
% \begin{itemize}
% \item Continuous variables: $p(x_i|\Zb)=\mathcal{N}(\mu_i(\Zb),\sigma_i(\Zb))$.% where $\mu_i(\Zb)$ and $\sigma_i(\Zb)$ are outputs of DNNs with input $\Zb$.
% \item Positive real-valued data: $p(x_i|\Zb)=\log\mathcal{N}(\mu_i(\Zb),\sigma_i(\Zb))$ ( log-normal distribution).
% \item Categorical data: After one-hot encoding, the posterior distribution is given by a softmax $p(x_i=j|\Zb)=\frac{\text{exp}^{-s_j(\Zb)}}{\sum_{t=1}^m \text{exp}^{-s_t(\Zb)}}$, where $s_t(\Zb)$ is the output of the decoder corresponding to the $t$th category.
% \end{itemize}
%In order to account for heterogeneous range of attributes, similar to \cite{nazabal2018handling} we apply a batch normalization layer at the input of the encoder network, and the complementary batch denormalization is applied at the output layer of the decoder. 
The loss function is similar to the ELBO used for training of PVAE in \cite{ma2018eddi}.
\begin{align}
\begin{split}
&\log p(\Xt)\geq \log p(\Xt)-D_{KL}\left(q(\Zb\vert\Xt)\vert\vert p(\Zb|\Xt)\right)\\
&=\E_{\Zb\sim q(\Zb|\Xt)}\left[\log p(\Xt|\Zb)\right]-D_{KL}\left(q(\Zb\vert\Xt)\vert\vert p(\Zb)\right).
\end{split}\label{eq:ELBO}
\end{align}
We consider normal distribution for $p(\Zb)=\mathcal{N}(0,1)$. Similar to \cite{ma2018eddi,nazabal2018handling} we assume the two following equations hold. The  first one states the independence of attributes given the latent variable, i.e.
\be p(\xb|\Zb)=\prod_{i=1}^d p(x_i|\Zb),\label{eq:ind}\ee
The second one states that all the information about unobserved attributes in $\xt$ is encoded into $\Zb$, i.e. if $\xb_M$ represents the set of missing attributes, then we have
\be p(\xb_M|\xt,\Zb)=p(\xb_M|\Zb).\label{eq:latent}\ee
\subsection{SPVAE} \label{sec:stat}
% One option is to use the following estimator for $\theta(\xb,a)$:
% %\be \hat\theta(a,x)=\frac{\sum_{i=1}^N p(x|y_i)\, \mathbb{1}[A_i=a]\, R_i}{\sum_{i=1}^N p(x|y_i)\,\mathbb{1}[A_i=a]}.\ee
% \be \hat\theta(\xb,a)=\frac{1}{N}\sum_{i=1}^N \text{sim}(\xb,\Xt_i)\, \frac{\mathbb{1}[A_i=a]\, R_i}{\pi_0(a|\Xt_i)}.\label{eq:sim}\ee
% Where $\text{sim}(\xb,\Xt_i)$ is a function to measure the similarity between $\xb$ and $\Xt_i$. For example, $\text{sim}(\xb,\Xt_i)=\mathbb{1}[\xb=\Xt_i]/p(\Xt_i)$ gives an unbiased estimate (only for $\xb$'s which are observed in the training set, i.e. for which there exist $i$ such that $\Xt_i=\xb$). However, this estimator does not use $\Xt_i$'s with missing attributes (which can be most or all of dataset). Moreover, in most of the practical applications it is unrealistic (or impossible) to assume that all features have been sampled in the training set. Hence, we suggest the following biased estimator:
%\be \hat\theta(\xb,a)=\frac{1}{N p(\xb)}\sum_{i=1}^N p(\xb|\Xt_i)\, \frac{\mathbb{1}[A_i=a]\, R_i}{\pi_0(a|\Xt_i)}.\label{eq:stat}\ee
We suggest using \rev{the} following simple estimator for finding $\hat\theta(\xb,a)$:
\be \hat\theta(\xb,a)=\sum_{i=1}^N w_i\, \frac{\mathbb{1}[A_i=a]\, R_i}{\hat \pi_0(a|\Xt_i)}.\label{eq:stat}\ee
where $w_i=\frac{p(\xb|\Xt_i)}{\sum_{j=1}^N p(\xb|\Xt_j)}$ are similarity scores corresponding to each of the data samples, \rev{and $\hat \pi_0(a|\Xt_i)$ is the estimation of the propensity score. We explain how to compute $\hat \pi_0(a|\Xt_i)$ in the next subsection}. Essentially $w_i$ shows how much the reward of sample $\Xt_i$ is relevant for estimating the reward for $\xb$. The inverse propensity score term adjusts for the selection bias in the data.
%Here, we assume that $\text{sim}(\xb,\Xt_i)=\frac{p(\xb,\Xt_i)}{p(\xb)p(\Xt_i)}$. Notice that we have $\E[\log(\text{sim}(\xb,\Xt))]=I(\xb,\Xt)$, which is intuitive and in a way implies that this predictor gives higher weight to pairs based on their relevance. For computing \eqref{eq:stat} we need to estimate $p(\xb|\Xt_i)$, $p(\xb)$, and $\pi_0(A|\Xt)$. We use PVAE for estimating $p(\xb|\Xt_i)$ and $p(\xb)$. Note that if $\xb$ and $\Xt_i$ differ in a single observed attribute then clearly $p(\xb|\Xt_i)=0$. However, the approximated posterior distribution induced from PVAE will not be zero (e.g. $p(\xb|\Xt_i)$ may be multiplication of Gaussians). This is desirable in the sense that when attributes of the feature are continuous, it is unlikely that a given pair match exactly on the observed attributes. The PVAE approximation will give non-zero values for $p(\xb|\Xt_i)$, and hence, \eqref{eq:stat} will be non-zero for unseen examples. %implicitly assumes that the reward is related to  . and in practical applications .
%We discuss more about the similarity function and some of the characteristic of this estimator in the appendix.
For estimating $p(\xb|\Xt_i)$, we can feed $\Xt_i$ to PVAE, the output of the network gives the required posterior distribution. Recall that we assumed Gaussian distribution for the output of the VAE. Using \eqref{eq:ind} and \eqref{eq:latent} we have
\be p(\xb|\Xt_i)=\prod_{j=1}^d p(x_j|\Zb). \label{eq:weight}\ee
A variation of SPVAE method that computes $\hat\theta(\xb,a)$ in a slightly different way is proposed in the supplementary material.
\rev{ \textbf{Remark.} Notice that the summation in \eqref{eq:stat} may become computationally costly. If this is the case, one can randomly choose $M<N$ samples from the dataset and estimate $\hat\theta(\xb,a)$ only using those $M$ samples. The CPVAE method that we propose next will not have this issue, since it can estimate the reward with a singe forward pass through a network.} 
%Here $\Mc_{\xb}$ is the set of missing attributes in $\xb$ and $x_i$ represents the $i$th attribute. %For observed attributes of $\Xt_i$ we recenter the output of PVAE to the observed value if needed, i.e. when $j$th element of $\Xt_i$ is observed (if $\mu_j(\Zb)\neq \xt_i(j)$) we assume the distribution of $p(x_i|\Zb)$ to $p(x_i|\Zb)=\mathcal{N}(\xt_i(j),\sigma_i(\Zb))$.
%We approximate $p(\xb)$ by computing the lower bound in \eqref{eq:ELBO}. We feed $\xb$ to PVAE and then sample $\tilde{t}$ times from $q(\Zb|\xb)$, to approximate $$\E_{\Zb\sim q(\Zb|\xb)}\left[\log p(\xb|\Zb)\right].$$ Note that KL divergence on the right hand side of \eqref{eq:ELBO} has a closed form representation. 
\subsection{Propensity Score}
For \rev{computing $\hat \pi_0(A|\Xt)$,} we first use a multiple imputation method to produce multiple complete datasets. Any standard multiple imputation method like \cite{2010mice} or \cite{yoon2018gain} can be used. Then we fit a standard multinominal logistic regression model similar to \cite{atan2019constructing} on the completed features. In the test time we average the propensity score over multiple imputations. This is a classical method that is well studied in the literature \cite{d2000estimating, hill2004reducing,mattei2009estimating}.  %The propensity scores can be estimated by fitting This does propensity with missing value , this says multiple imputation is good \cite{mitra2011estimating} Also 
(It is known that averaging the propensity score before performing casual inference gives better result \cite{mitra2016comparison}.) %We will use these estimated propensity scores also in the next method for training CPVAE.
A more advanced method for estimating propensity scores with missing values is recently introduced in \cite{mayer2020doubly}. We leave exploring effect of using more advanced methods for the future work. 
%\be \pi_0(a_i|\Xb)=\frac{\text{exp}(\beta_i^T \xb_i)}{\sum_{a\in A}\text{exp}()}\ee
%\subsubsection{Estimating propensity scores}  
%The propensity scores can be estimated by fitting a standard multinominal logistic regression model similar to \cite{atan2019constructing}. We will use these estimated propensity scores also in the next method for training CPVAE.

% One problem with this is that we may have $p(x|y_i)=0$, because there is no $y_i$ in the dataset that its values which are not missed matches exactly with the corresponding values of $x$. Hence, we may not be able to use this method, or we may need to change the definition of $p(y|x)$ (i.e. add some sort of noise to the problem) such that have non-zero probability of $p(y|x)$ when non-missed values of $y$ are close to $x$.

% \cem{We can ensure that $p(x|y_i)$ is nonzero by defining this probability using something like $p(x|y_i) \propto e^{- \gamma ||x - y_i||^2}$. $\gamma$ will control the smoothness.}

% Thus,
% \be E[\hat\theta(a_0,x_0)]=\frac{\sum_{i=1}^N \left(p(x_0|y_i)\, \mathbb{1}[a_i=a_0]\, \sum_x \theta(a_i,x) p(x|y_i)\right)}{\sum_{i=1}^N p(x_0|y_i)\,\mathbb{1}[A_i=a]}.\ee

%\be \hat\theta(a,x)=\frac{1}{\sum_{i=1}^N p(x|y_i)}\sum_{i=1}^N p(x|y_i)\, \frac{\mathbb{1}[A_i=a]\, R_i}{\pi_0(a|y_i)}.\ee

\subsection{CPVAE}
In this section, we modify PVAE and use it as an \rev{end-to-end} network to produce an estimation of $\theta(\xb,a)$. We use conditional VAE and call this method CPVAE.  Firstly, the input of the CPVAE is different. During training, the input is a subset of rewards, actions, and observed attributes that we represent with $(\Xt_i,A_i,\tilde{R}_i)$ (by denoting the rewards with $\tilde{R}$ we highlight that they might be missing from the input of the network). The idea is that in the test time, the reward can be treated as a missing attribute of the input, i.e. the input \rev{at} test time will be the observed attributes and action $(\Xt,A,*)$. The decoder network attempts to reconstruct $(\Xb,R)$, hence it will produce an estimation for the reward (see Fig. \ref{fig:sub2}). This method has the advantage that the correlations among different attributes of feature, reward, and action are expected to be captured by the latent variable $\Zb$. Also in comparison with SPVAE for producing the estimated rewards, we do not need to compute the summation in \eqref{eq:stat} and we can get the reward with a single forward pass through the network. (Note that it is expected to have a better quality of imputation using outcome in the imputation process \cite{moons2006using,hill2004reducing}.) 
\subsubsection{Loss Function}
%As we mentioned earlier PVAE will be used for two purposes, the inputs to the network and loss function will be different:
%For finding both $p(\xb|\xt)$ and $\theta(\xb,a)$: In this setup we use PVAE as an end to end network which directly estimates the expected reward. The input of the network is the observed context, action, and the reward $(\Xb_i,A_i,R_i)$ and the network tries to reconstruct complete context and the reward $(\Xb,R)$. 
The standard ELBO loss function for CPVAE can be written as follows:
% \begin{align}
% \begin{split}
% \log p(\Xt,\tilde{A},\tilde{R})&\geq \log p(\Xt,\tilde{A},\tilde{R})\\
% &-D_{KL}\left(q(\Zb\vert\Xt,\tilde{A},\tilde{R})\vert\vert p(\Zb|\Xt,\tilde{A},\tilde{R})\right)\\
% &=\E_{\Zb\sim q(\Zb|\Xt,\tilde{A},\tilde{R})}\left[\log p(\Xt,\tilde{A},\tilde{R}|\Zb)\right]\\
% &-D_{KL}\left(q(\Zb\vert\Xt,\tilde{A},\tilde{R})\vert\vert p(\Zb)\right).
% \end{split}\label{eq:elbo_EE}
% \end{align} 
\begin{align}
\begin{split}
\log p(\Xt,\tilde{R}|A)&\geq \log p(\Xt,\tilde{R}|A)\\
&-D_{KL}\left(q(\Zb\vert\Xt,\tilde{R},A)\vert\vert p(\Zb|\Xt,\tilde{R},A)\right)\\
&=\E_{\Zb\sim q(\Zb|\Xt,\tilde{R},A)}\left[\log p(\Xt,\tilde{R}\vert\Zb,A)\right]\\
&-D_{KL}\left(q(\Zb\vert\Xt,\tilde{R},A)\vert\vert p(\Zb\vert A)\right).
\end{split}\label{eq:elbo_EE}
\end{align} 
In order to account for the selection bias in the loss function, we use IPS technique and write the final loss as:

\begin{align}
\begin{split}
\mathcal L=&\frac{1}{N}\sum_{i=1}^N \sum_{a\in \Ac} \Bigg(\log p(\Xt_i|\Zb,A)+\log p(R_i|\Zb,A)\\
&-D_{KL}\left(q(\Zb\vert\Xt,A,R)\vert\vert p(\Zb|A)\right)\Bigg)\frac{\mathbb{1}[A_i=a]}{\hat \pi_0(a|\Xt_i)}
\end{split}
\end{align} 
Notice that $\mathbb{E}[\mathcal L]$ is equal to the lower bound in \eqref{eq:elbo_EE}. The IPS term can also be interpreted as a method to force the autoencoder to learn rare action-feature pairs more carefully by penalizing the loss function.% Also, notice that action is a categorical variable, hence we use one-hot encoding and softmax distribution for $p(A_i|\Zb)$ as stated in Section \ref{sec:PVAE}.
\subsection{Implementing Strategies}
In this section, we explain how to implement three strategies using our two suggested methods.
%Now we explain how one can implement the three strategies discussed in Section \ref{sec:strategies} using PVAE and estimator in \eqref{eq:stat}. 
\begin{itemize}
\item \textbf{Imputation:} For SPVAE, when $\xt$ is observed, we first find the output of PVAE to impute the missing attributes of $\xt$ (we do not change the values which are not missing). Denote the complete feature vector by $\xb$. We now use \eqref{eq:stat} to estimate $\hat{\theta}(\xb,a)$ for all possible actions $a$, and then recommend the action which maximizes $\hat{\theta}(\xb,a)$. For CPVAE we simply feed $\xt$ along with different actions to the network and choose the action with highest expected reward.
\item \textbf{Maximum Expected Reward:} For SPVAE we feed $\xt$ to PVAE, then sample $t$ times from $q(Z|\xt)$ ($t$ is a hyperparameter) to get $\zb_1,\cdots,\zb_t$. Denote the imputed output of the decoder network of these $t$ latent variables by $\xb_1,\cdots,\xb_t$. For all $a\in \Ac$ and $\xb_i$, we use equation \eqref{eq:stat} to compute $\hat\theta(\xb_i,a)$. Then, we recommend $a$ which maximizes the average reward of these $t$ inputs. We do similarly for CPVAE, the estimation of the $\hat\theta(\xb_i,a)$ will be done by observing output of CPVAE.  %Note that since we sample from $p($ \hat\theta(\xb_i,a)p(\xb_i|\xt)$ %(recall that $p(\xb_i|\xt)$ can be computed using \eqref{eq:weight}). 
\item \textbf{Conservative:} We need to compute the following expression for all $a\in \Ac$
$$\min_{\xb:p(\xb|\xt)>cp(\hat \xb|\xt)} \theta(\xb,a).$$
%This expression can be approximated using constrained Bayesian optimization method \cite{bayesian2014}. First, fix an action $a$. The method in \cite{bayesian2014} in each iteration suggests a new value $\xb$ for which we evaluate $\hat\theta(\xb,a)$ (if the constraint is satisfied). After few iterations when the algorithm converges we find the above expression. Then, we recommend the action which maximizes the above expression. When the Gaussian process approximation is not suitable for reward function (e.g. when reward is binary), 
First we pass $\xt$ through PVAE to get $\hat x$ and the posterior distribution $p(\xb\vert \xt)$. We produce $u$ samples from the generator model \rev{through random sampling.} That is, we can randomly sample $u$ times from $P(\Zb)$ to get $\zb_1,\cdots,\zb_u$ (recall that $p(\Zb)=\mathcal{N}(0,1)$). Then output of the decoder gives us $u$ generated samples $\xb_1,\cdots,\xb_u$. Now using posterior distribution $p(\xb\vert \xt)$ we find samples which satisfy the constraints. Assume that $\mathcal S$ is the set of all indices $1\leq i\leq u$ which $p(\xb_i|\xt)$ satisfies the inequality constraint. Then for SPVAE, we compute $\min_{i\in S}\hat\theta(\xb_i,a)$ using \eqref{eq:stat} for all $a\in \Ac$ and recommend $a$ which maximizes this expression. For CPVAE, for all $a\in \Ac$ we pass $\vert \mathcal S \vert$ samples along with actions $a$ and compute the minimum reward for each action. Then we recommend the action with highest reward.
\end{itemize}
\section{Experiments}\label{sec:exp}
%We need two figures for comparing different policies. One figure will be the comparison of the expected rewards of the different policies (best policy, imputation policy, and conservative policy for various values of $c$). 
In this section, we evaluate our suggested methods using three experiments. First, we use MNIST dataset \cite{MNIST}, and frame the usual classification task for identifying handwritten digits in a logged bandit setup. \rev{Note that for policy recommendation problems, since counterfactuals are not available,} evaluating an algorithm is not directly possible\rev{. That} is why here, similar to many other works (e.g. see \cite{atan2019constructing}), we use a classification problem to evaluate our method. \rev{We use this dataset to highlight the differences between the three strategies discussed in the paper.}  % MNIST can clearly represents the differences of the three strategies. 
\rev{Secondly, we use IHDP dataset \cite{hill2011}, which is a widely used dataset in treatment effect literature, to compare \rev{the} predictive capability of our methods in the presence of missing value with other recent suggested methods.} We show that our methods outperform state-of-the-art methods for estimating average treatment effect in the presence of  missing values. \rev{Finally, to further evaluate our method, we use OhioT1DM dataset \cite{marling2018ohiot1dm}, a medical dataset that includes blood glucose measurements and insulin doses for numerous type 1 diabetes mellitus patients using insulin pump therapy. }

\begin{table}[t!]
\centering
\caption{Expected reward of different strategies for MNIST data with $50\%$ missing attributes and average number of instances of reward less Than $-7$. \vspace{1mm}}
\begin{tabular}{ccc} 
\toprule
  &  $R$ & Num. of inst. $R<-7$   \\
\midrule   
Imputation  &$-1.21 \pm 0.01$&$2.8 \pm 0.83$ \\
MER   & $-1.18 \pm 0.01$&$2.7 \pm 0.47$ \\
Cons $c=0.7$ &$-1.51 \pm 0.02$&$1.3 \pm 1.05$\\
Cons $c=0.1$ & $-2.10 \pm 0.02$&$0.2 \pm 0.04$ \\
Cons $c=0.001$ & $-2.58 \pm 0.01$ &$0.0 \pm 0.00$ \\
%Direct & *&*&*&*&*&*&* & - \\
\bottomrule 
\end{tabular}
\label{MNIST-unbiased}
\vspace{-5pt}
\end{table}
% \begin{table*}[t]
% \centering
% \begin{tabular}{ccccccccc} 
% \toprule
%   &  CPVAE &SPVAE& LR1 & LR2 & RandF1 & RandF2 &GANITE &PONN-B  \\
% \midrule   
% Imputation  &* &* & * &*&*&*&* &*\\
% Max. Exp. Reward   &* &*&*&*&*&*&* & * \\
% Conservative $c=0.4$ &* &*&*&*&*&*&* & - \\
% Conservative $c=0.1$ & *&*&*&*&*&*&* & - \\
% %Direct & *&*&*&*&*&*&* & - \\
% \bottomrule 
% \end{tabular}
% \caption{Expected reward of different methods for MNIST dataset with selection bias \vspace{1mm}}
% \label{MNIST-biased}
% \vspace{-5pt}
% \end{table*}
\subsection{MNIST}
In the first experiment, we use the MNIST dataset. \rev{The goal of this experiment is to compare different strategies introduced in Section \ref{sec:strategies}. Thus, here we only implement CPVAE using the three strategies.} The complete feature has $784$ attributes, each one is a number between 0 to 255. We will erase a fixed percentage of pixels (50 percent in this experiment) from each image uniformly at random. The goal is to predict the correct label associated with the image, hence the set of actions is $\Ac=\{0,1,\cdots,9\}$. The reward is defined as a Gaussian, centered around the difference of the true label ($y_i$) and the predicted one (i.e. action $A_i$) $$R_i\sim \mathcal N(-\vert y_i- A_i\vert,0.1).$$ 
Note that this is different from the standard binary reward defined for classification task (i.e. $R=1$ if the predicted label is correct, and zero otherwise). The reason we choose this reward is to highlight the differences between the three strategies and the necessary compromises that need to be made in the face of uncertainty. For example, assume that we are considering an image which is $0$ with probability $0.7$ and $8$ with probability $0.3$. In this scenario, using the reward that we defined all three strategies are meaningful (i.e. you may choose 2 to avoid low probability) while with the binary reward, all three strategies coincide (all three recommend $a=0$). %We perform two experiments, in the first experiment we do not have selection bias, i.e., in the training dataset actions are chosen uniformly at random. In the second experiment, we create the dataset by first training a multinomial logistic regression model on $20\%$ of the data that predicts the label of the image (with 50 percent erased pixels). We then use this LR model to produce the propensity scores for the biased dataset. The remaining of dataset is randomly divided into $60\%$ training and $20\%$ test sets.
The mechanism for assigning actions to images for creating dataset is as follows. For images representing even numbers, $\pi_0(a|\Xt_i)=1/20$ for $0\leq a<5$ and $\pi_0(a|\Xt_i)=3/20$ for $5\leq a<10$. For odd images, $\pi_0(a|\Xt_i)=3/20$ for $0\leq a<5$, and $\pi_0(a|\Xt_i)=1/20$ for rest of the actions.

In Table \ref{MNIST-unbiased} the average reward of different strategies is reported. We are using CPVAE method in this experiment. The maximum expected strategy get the highest reward as expected, followed by the imputation strategy. It can be seen that, as we decrease parameter $c$, the expected reward decreases. In exchange, the number of instances for which we get a poor reward (here we considered reward less than $-7$) is decreasing with $c$.

In Fig. \ref{fig:dist}, we show the distribution of recommended action for three conservative strategies where we change tuning parameter $c$, from top to down $c=0.001$, $c=0.1$, and $c=0.7$. For the first figure with $c=0.001$ it can be seen that the method always chooses action $5$, which is the safest action. This action avoids losses more than $5$. Since $c$ is too small, images of different digits can pass the condition on \eqref{eq:conservative} and hence the best action would be $5$ (or $4$). \rev{It can be seen that, as we increase $c$, fewer images with random digits pass the constraint and, as a result, the distribution of actions} spread over different actions. The details of the experiment setup and some additional experiments are available in the supplementary materials.
% \begin{figure}[t]
%     \centering
%     %\includegraphics{}
% \includegraphics[width=0.4\textwidth]{}
% \includegraphics[width=0.4\textwidth]{}
% \includegraphics[width=0.4\textwidth]{}
% \caption{Distribution of actions recommended by conservative strategy with $c=0.001$, $c=0.1$, and $c=0.7$ from top to bottom.}
% \label{fig:dist}
% \end{figure}
\begin{figure}[t]
\centering
\includegraphics[width=0.4\textwidth]{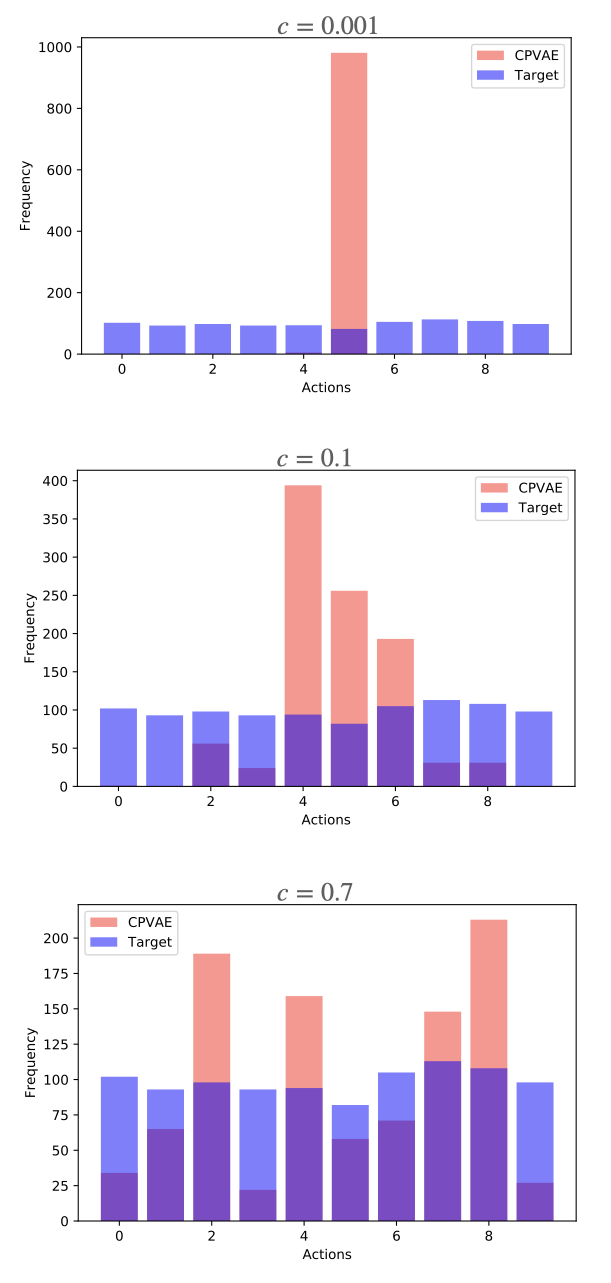}
\caption{\rev{Distribution of actions recommended by conservative strategy with $c=0.001$, $c=0.1$, and $c=0.7$ from top to bottom.}}
\label{fig:dist}
\end{figure}
\subsection{IHDP} 
In this section we repeat the experiment in \cite{mayer2020missdeepcausal} on IHDP dataset. IHDP is a semi-synthetic datasets based on the Infant Health and Development Program (IHDP) compiled by Hill \cite{hill2011}. This experiment studies the effects of specialist home visits on future cognitive test scores. The dataset comprises 25 attributes for each instance and 747 instances in total (139 treated, i.e., $a=1$, and 698 instances with $a=0$). 
Following \cite{mayer2020missdeepcausal} we report the in-sample mean absolute error in the estimation of Average Treatment Effect (ATE). ATE denoted by $\tau$ is defined as follows:
$$\tau=\mathbb{E} [R(1)-R(0)]=\mathbb{E}[\mathbb{E} [R(1)-R(0)|\Xt]].$$
Since both values of $R(0)$ and $R(1)$ for all $\Xb$'s are known from the dataset we can calculate the mean absolute error exactly $\Delta= \left\vert \hat \tau- \frac{1}{n}\sum_i R(1)_i-R(0)_i\right\vert$.
We consider scenario ``B'' of \cite{hill2011}, where $R(0)\sim \mathcal N (\mu_0,1)$ and \rev{$R(1)\sim \mathcal N (\mu_1,1)$}. Here $(\mu_0,\mu_1)=(\text{exp}(X+A)\beta,X\beta-\omega)$, $\omega$ is chosen such that we have an average treatment effect of $\tau=4$. The missing values are added with Missing Completely At Random (MCAR) mechanism. We compare three missing rates of $10\%$, $30\%$, and $50\%$.
We compare our results with several recent methods in Table \ref{IHDP-table}. MI is \rev{the} multiple imputation approach suggested in \cite{mattei2009estimating,seaman2014inverse} with 20 imputations. MF is the matrix factorization method introduced in \cite{kallus2018causal}, and MDC.process and MDC.mi are two methods introduced in \cite{mayer2020missdeepcausal}. They use a VAE to produce a latent space. Then for finding $\tau$ they fit an estimator on the latent variable. 
For all three above methods the result of an OLS estimator and two different doubly robust estimators are reported in Table 1 of \cite{mayer2020missdeepcausal}. 
Here we only report the best of the three results for each of the methods for each setting and refer to \cite{mayer2020missdeepcausal} for the complete table. MIA.GRF is a doubly robust estimator suggested in \cite{mayer2020doubly}. Finally, CEVAE a method introduced in \cite{louizos2017causal}, is another baseline we compare with. This method is not designed to work with missing values, thus a mean imputation method was performed to get the complete features before applying this method for estimating ATE. For a more detailed explanation about these competitor methods we refer to \cite{mayer2020missdeepcausal}[Section 4].

In Table \ref{IHDP-table}, for SPVAE and CPVAE we use \rev{the} maximum expected reward strategy with 5 times imputations. We can see that both of our methods outperform other methods in $50\%$ missingness and also SPVAE has the best result for $30\%$ (and comparable result in $10\%$). In the appendix we provide a table where we show our result is not sensitive to the choice of hyperparameters. %We hypothesize that the reason for CPVAE fails to work as well as $50\%$ with lower missingness this behaviour is lies within PVAE structure. Since, there are the dataset is not very large with  cannot get trained with few on average there are two missing values. (note that MIA.grf also performs better at $30\%$ comparing to $10\%$).

%We randomly divide the 747 instances to 249 groups of three, and use 200 of the groups for training and 49 groups for test.
%Note that the action (treatment) for this benchmark task is binary, and $a=1$, i.e. assigning treatment, almost always gives higher reward. This is not appropriate for our problem setup. In order to highlight the differences between strategies we modify the problem as follows.
%We split infants to group of 3. Now we assume that treatment (specialist visiting) can only be assigned to one infant in each group. Therefore, we have three actions for each group. We define the reward for each action as the summation of the three rewards.
%We repeat the experiment 1000 times and report the average performance.
\begin{table}[t!]
\caption{Mean absolute error with standard error for estimation of ATE for various missing rates on IHDP benchmark data. \vspace{1mm}}\label{IHDP-table}
\centering
%\resizebox{\columnwidth}{!}{
\begin{tabular}{cccc} 
\toprule
  & 10\% &30\% & 50\%    \\
\midrule   
MI  &0.16 $\pm$ 0.00 &0.30 $\pm$ 0.00 & 0.42 $\pm$ 0.01 \\
MIA.grf &0.23 $\pm$ 0.01&0.17 $\pm$ 0.01 &0.19 $\pm$ 0.01\\
MF   &0.15 $\pm$ 0.01 & 0.16 $\pm$ 0.01  &0.20 $\pm$ 0.01  \\
CEVAE & 0.31 $\pm$ 0.01&0.38 $\pm$ 0.02&0.38 $\pm$ 0.02  \\
MDC.pro &0.15 $\pm$ 0.01 &0.15 $\pm$ 0.02&0.20 $\pm$ 0.01 \\
MDC.mi & \textbf{0.13 $\pm$ 0.02}&0.13 $\pm$ 0.01&0.18 $\pm$ 0.01  \\
SPVAE & \textbf{0.14 $\pm$ 0.01} & \textbf{0.10 $\pm$ 0.01} & \textbf{0.11 $\pm$ 0.01} \\
CPVAE  & 0.18 $\pm$ 0.01 & 0.17 $\pm$ 0.01& 0.14 $\pm$ 0.02 \\
\bottomrule 
\end{tabular}
\vspace{-5pt}
\end{table}
\subsection{Type 1 Diabetes OhioT1DM data}
For this experiment, we are using OhioT1DM dataset \cite{marling2018ohiot1dm} which contains continuous glucose monitoring, insulin dosage, physiological sensor, and self-reported life-event data for six patients with type 1 diabetes for eight weeks. Patients receiving insulin therapy are exposed to risk of hyperglycemia and hypoglycemia due to underdosing and overdosing. Therefore it is important that they receive the right dosage of bolus insulin. Note that for evaluating a recommendation method we need to have access to counterfactuals which are not available, hence, it is not possible to directly use the dataset. Here, we follow \cite{turgay2020exploiting}, and first use the dataset to train a simulator using gradient boosting and then use the trained model to produce glucose level for a pair of context (feature) and action (bolus insulin dosage). The corresponding reward will be computed using \eqref{eq:rew:ohio}. There are nine attributes for each patient and ten actions uniformly chosen between 0 and 1 (corresponding to normalized insulin dosage). To create missingness, we erase each attribute independently with probability $0.3$. We refer to appendix for a more detailed explanation of the experiment setting. We compare our method with several baselines including logistic regression (LR) and random forest (RF). In LR1 and RF1 we consider the action as one of the attributes whereas in LR2 and RF2 we train 10 different models corresponding to each of the actions. Many of the more recent competing methods accommodate only two actions and hence cannot be directly used in our setting. Here we compare with GANITE \cite{ganite2018} a GAN-based method which does not have a restriction on the number of actions. For these baseline methods we first impute missing values using both mean imputation and PVAE (we only report the best performance of the two, and use multiple imputations when using PVAE) and then feed the completed feature to the algorithm. As demonstrated in Table \ref{ohio-table}, CPVAE with MER strategy outperforms other methods and \rev{the proposed} conservative strategy has fewer instances with rewards less than -2.
\be
R=\begin{cases}
\frac{x-80}{10}, \hspace{5mm}  x\leq 90 \text{ (hypoglycemia)}\\
1, \hspace{10mm} 90\leq x\leq 130\\
\frac{180-x}{50},\hspace{5mm} 130\leq x\text{ (hyperglycemia)}
\end{cases} \label{eq:rew:ohio}
\ee
% \be
% R=\begin{cases} 0, \hspace{10mm} x\leq 80 \text{ (hypoglycemia)}\\
% \frac{x-80}{10}, \hspace{5mm} 80\leq x\leq 90\\
% 1, \hspace{10mm} 90\leq x\leq 130\\
% \frac{180-x}{50},\hspace{5mm} 130\leq x\leq 180\\
% 0, \hspace{10mm} 180\leq x \text{ (hyperglycemia)}
% \end{cases}
% \ee
\section{Conclusion and Future work}
In this work we proposed using \rev{a} conservative strategy for dealing with uncertainty due to missingness. We suggested two methods for counterfactual estimation in the presence of missing data using VAEs. Our methods were based on using IPS which is known to have high variance. One direction for future work is to improve the method we used for estimation of the propensity scores (e.g. see \cite{mayer2020doubly}). We assumed unconfoundedness with missing values which may not hold in some scenarios. Looking for methods for relaxing this condition is another direction for future work. %Another source of uncertainty is the variance of the reward estimation. 
One direction for future work is to also account for the uncertainty due to the variance of our reward estimation which could vary for different actions and this can change our recommendation (we may decide to choose an action with smaller variance). Also, for computing \eqref{eq:conservative} we used random sampling. The minimum in this expression can be approximated using constrained Bayesian optimization method \cite{bayesian2014}.
\begin{table}[t!]
\caption{Average reward and the percentage of rewards less than $-2$ of different methods for OhioT1DM dataset  \vspace{1mm}}\label{ohio-table}
\centering
%\resizebox{\columnwidth}{!}{
\begin{tabular}{ccc} 
\toprule
 Method & Average Reward & $R<-2$    \\
\midrule   
LR1  &-0.64 $\pm$ 0.02 & 0.112 $\pm$ 0.01  \\
LR2 &-0.62 $\pm$ 0.04 & 0.101 $\pm$ 0.01 \\
RF1   &-0.62 $\pm$ 0.02 & 0.110 $\pm$ 0.02    \\
RF2 & -0.61 $\pm$ 0.01 & 0.108 $\pm$ 0.01 \\
GANITE & -0.58 $\pm$ 0.09 & 0.094 $\pm$ 0.01 \\
\rev{SPVAE-Imp.} & -0.58 $\pm$ 0.03 & 0.087 $\pm$ 0.01  \\
\rev{SPVAE-MER}  & -0.56 $\pm$ 0.03 & 0.088 $\pm$ 0.01 \\
\rev{SPVAE-Cons. ($c=0.4$)} & -0.60 $\pm$ 0.02 & 0.081 $\pm$ 0.01 \\
CPVAE-Imp. & -0.58 $\pm$ 0.03 & 0.095 $\pm$ 0.02  \\
CPVAE-MER  & \textbf{-0.55 $\pm$ 0.02} & 0.093 $\pm$ 0.01 \\
CPVAE-Cons. ($c=0.4$) & -0.59 $\pm$ 0.02 & \textbf{0.080 $\pm$ 0.01} \\
%SPVAE-MER  & 0.18 $\pm$ 0.01 & 0.17 $\pm$ 0.01 \\
\bottomrule 
\end{tabular}
\vspace{-5pt}
\end{table}%Using Bayesian optimization for finding the minimum in \eqref{eq:conservative} \cite{bayesian2014}. 
%Another direction is to generalize the method for continuous action set $\vert\mathcal A\vert$, to support dosage.

%It is known that the variance of IPS method is high, hence we implicitly assumed that there are enough samples for all actions to have low variance estimates of the propensity score.  %If there are infinite number of actions (e.g. we want to determine the dosage of a particular drug). We can also perform Bayesian optimization for maximizing over $a$ in equations \eqref{eq:imp:best}, \eqref{eq:best}, \eqref{eq:conservative}.  Maybe also using vae for other proxies not just missingness..

% \textbf{Remarks}
% \begin{enumerate}
% \item The difference between the expected reward and conservative reward can be used as a hint about how much we gain from a test. i.e. if the action is the same for both methods we probably do not get anything out of the test.
% \item One suggestion for improving EDDI approach is that we may explicitly add the position of missing values to the set of features (the idea is that this can help the model in the scenarios where the missingness is not at random. E.g. maybe a patient has not a particular test because it was too risky to run that test). This information is of course implicit in the data, however, explicitly revealing this to the network will sometimes improve the performance. 
% \item It might be useful to hide rewards of small portion of dataset during the training.
% \item Finding relevant features as a future work

% \end{enumerate}
\vspace{0.5in}
\appendix
\subsection{Proposition 1}
%\subsection{Proof of Proposition 1}
\begin{proof}
First notice that since $H(a(\Xb)|\Xb)=0$, hence we have $H(a(\Xb),\Xb,\Xt)=H(\Xb,\Xt)$. We also have:
\begin{align*}
H(a(\Xb),\Xb,\Xt)=H(\Xt)+H(a(\Xb)|\Xt)+H(\Xb|\Xt,a(\Xb)).
\end{align*}
Therefore, 
$$H(\Xt,\Xb)=H(\Xt)+H(a(\Xb)|\Xt)+H(\Xb|\Xt,a(\Xb)).$$
Thus following equations hold:
\begin{align*}
H(a(\Xb)|\Xt)&=H(\Xt,\Xb)- H(\Xt)-H(\Xb|\Xt,a(\Xb))\\
&=H(\Xb|\Xt)-H(\Xb|\Xt,a(\Xb))\\
&=H(\Xb)-I(\Xb;\Xt)-H(\Xb|\Xt,a(\Xb))\\
&=I(\Xb;\Xt,a(\Xb))-I(\Xb;\Xt)\\
&=I(\Xb;a(\Xb))-I(\Xb;\Xt)+I(\Xb;\Xt|a(\Xb))\\
&=H(a(\Xb)) - (I(\Xb;\Xt) - I(\Xb;\Xt|a(\Xb))
\end{align*} 
In the above equations we used the fact that $H(\Xt,\Xb)=H(\Xt)+H(\Xb|\Xt)$ and $H(a(\Xb)|\Xb)=0$.
\end{proof}

\begin{example}
Assume that $\Xc=\{0,1\}^4$, and $X$ is uniformly distributed. The channel between $\Xb$ and $\Xt$ is an erasure channel which erases each bit independently with probability $1/2$. We have $\Ac=\{a_1,a_2\}$, and the reward is distributed as follows:
$$R\vert \xb,a_1 \sim \text{Ber}(\frac{x_1+x_2}{3}), \quad R\vert \xb,a_2 \sim \text{Ber}(\frac{x_3+x_4}{3}+0.1).$$
%When action $a_1$ is chosen the distribution of reward is $\text{Ber}((x_1+x_2)/2)$, and when action $a_2$ is chosen the reward is a $\text{Ber}((x_3+x_4)/2 +0.01)$. 
Therefore, if $x_1+x_2>x_3+x_4$, $a(\xb)=a_1$, otherwise $a(\xb)=a_2$. For instance, $X_i$ could be results of four different tests (which a subset of them will be available) and $A$ is the treatment assigned to the patient. In this setting we have $H(a(\Xb))=0.896$ this is because $x_1+x_2>x_3+x_4$ holds with probability of $5/16$, hence, we have $H(a(\Xb))=h_2(5/16)=0.896$ where $h_2$ is the binary entropy function. For computing $I(\Xb;\Xt)$, note that since attributes of $\Xb$ are independent and also each attribute will be erased independently we have: $$I(\Xb;\Xt)=4(H(\Xt)-H(\Xt|\Xb))=4(1.5-1)=2.$$
Finally, for computing $I(\Xb;\Xt|a(\Xb))$ note that $$I(\Xb;\Xt|a(\Xb))=H(\Xb|a(\Xb))-H(\Xb|\Xt,a(\Xb)).$$
Now for the second term we have:
\begin{align*}
    H(\Xb|a(\Xb))=& \frac{5}{16} H(\Xb|a(\Xb)=a_1)+\frac{11}{16} H(\Xb|a(\Xb)=a_2)\\
    =&\frac{5}{16}\log_2(\frac{1}{5})+\frac{11}{16}\log_2(\frac{1}{11}).
\end{align*} 
The other term can be computed similarly by considering the different cases of $\Xt$ and $a(\Xb)$ and using the symmetries for simplifications.
Thus, $H(a(\Xb)|\Xt)=0.570$. Note that, the probability that the best algorithm find $a(\Xb)$ by observing $\Xt$ is given by $$\frac{1}{2^{H(a(\Xb)|\Xt)}}.$$ 
Thus in our example, we can hope for guessing $a(\Xb)$ correctly on average in $67.3\%$ of cases.
\end{example}
\subsection{A Variation of SPVAE}
Instead of using propensity score we can estimate the reward using the following equation, by simply matching the similar actions. For each action $a\in \Ac$, define $\mathcal N_a =\{i: A_i=a\}$ to be the set of all indices with action $a$.
%$$w_i=\frac{p(\xb|\Xt_i)}{\sum_{j\in \mathcal N_a} p(\xb|\Xt_j)}$$
\be \hat\theta(\xb,a)=\sum_{i\in \mathcal N_a} w'_i\, R_i, \hspace{3mm} \text{where} \hspace{3mm} w'_i=\frac{p(\xb|\Xt_i)}{\sum_{j\in \mathcal N_a} p(\xb|\Xt_j)}.\label{eq:stat1}\ee
The idea is similar to original SPVAE, we estimate the reward with a weighted average of the reward of all instances for which action $a$ was prescribed. The weights measure the similarity between $x$ and $\Xt_i$.
Comparing this estimator with Eq. (12) of the paper, we note that the denominator of $w_i$ here is different and also the inverse propensity score is missing. Note that we have $\sum_{i=1}^n w_i=1$ in (12). However, on average only $\pi_0(a|\Xt_i)$ fraction of samples satisfy $\mathbb{1}[A_i=a]$.  One can interpret the propensity score in eq (12) as a way for compensate this. Basically we have 
\rev{\be\mathbb{E} \left[\sum_{i=1}^n w_i\frac{\mathbb{1}[A_i=a]}{\pi_0(a|\Xt_i)}\right]=1.\ee}
\subsection{Experiments}
Here we are outlining details of experiment setting, including hyperparameters and architecture of the neural nets. \rev{To implement our experiments, we have used JADE, the UK Tier-2 HPC Server specialised for deep learning applications. In particular, all our models are trained with a Nvidia Tesla V100 GPU card, with access to 70GB memory (though we did not use up the whole memory space). One can also run experiments 1 and 2 on a personal laptop. Also, tensorflow 1.15 is the library used for implementation.}
\subsubsection{MNIST}
For our MNIST experiment, we used a reduced MNIST dataset of 5000 data points, and performed a 80\%-20\% train-test split. We used the CPVAE architecture. The encoder followed the PNP-based architecture from \cite{ma2018eddi}. %The training objective is to reproduce the all the pixels and their corresponding rewards $r$, conditioned on the action $a$.  
We used 400 dimensional feature mapping $h$ parameterized by a single fully connected network with ReLU activations, and 20 dimensional ID $\mathbf{e_i}$ for each variable. For Gaussian latent variables we used a 20 dimensional diagonal vector to represent it. The encoder (denoted by function $f$ in the paper) is a $k$-500-200-40, where $k$ is a vector resulted from the concatenation of $h$ and $a_{one}$, a one-hot encoded action vector. The network $f$ is a fully connected neural network with ReLU activations. The decoder (generator) shares the similar architecture: $Z$-200-500-$D$, where $Z$ is a vector resulted from the concatenation of the latent variable and $a_{one}$, thus here $Z=30$. Also, $D$ represents the output of the generator model which should produce pixels and the reward, hence $D=785$. For the conservative strategy, we generated $50$ random samples (with the notation of paper  $u=50$). 
During the training phase, we created artificial missingness to dataset by randomly erasing $50\%$ of the pixels from each image. We used an Adam optimizer with default hyperparameter settings, a learning rate of $0.001$, and a batch size of 8. We trained the network for 20 epochs and repeated the experiment for 100 times to get our results.
\subsubsection{Additional Experiment Results}
In Fig. \ref{fig:appedix:fig} the distribution of the rewards for MNIST experiment for three cases of conservative strategy with $c=0.001$ and $c=0.4$ and also imputation strategy is provided. As expected, this illustrates that imputation strategy has a longer tail in comparison with conservative strategies. Also, $c=0.001$ has the shortest tail.
\begin{figure}[t!]
\centering
\includegraphics[scale=0.5]{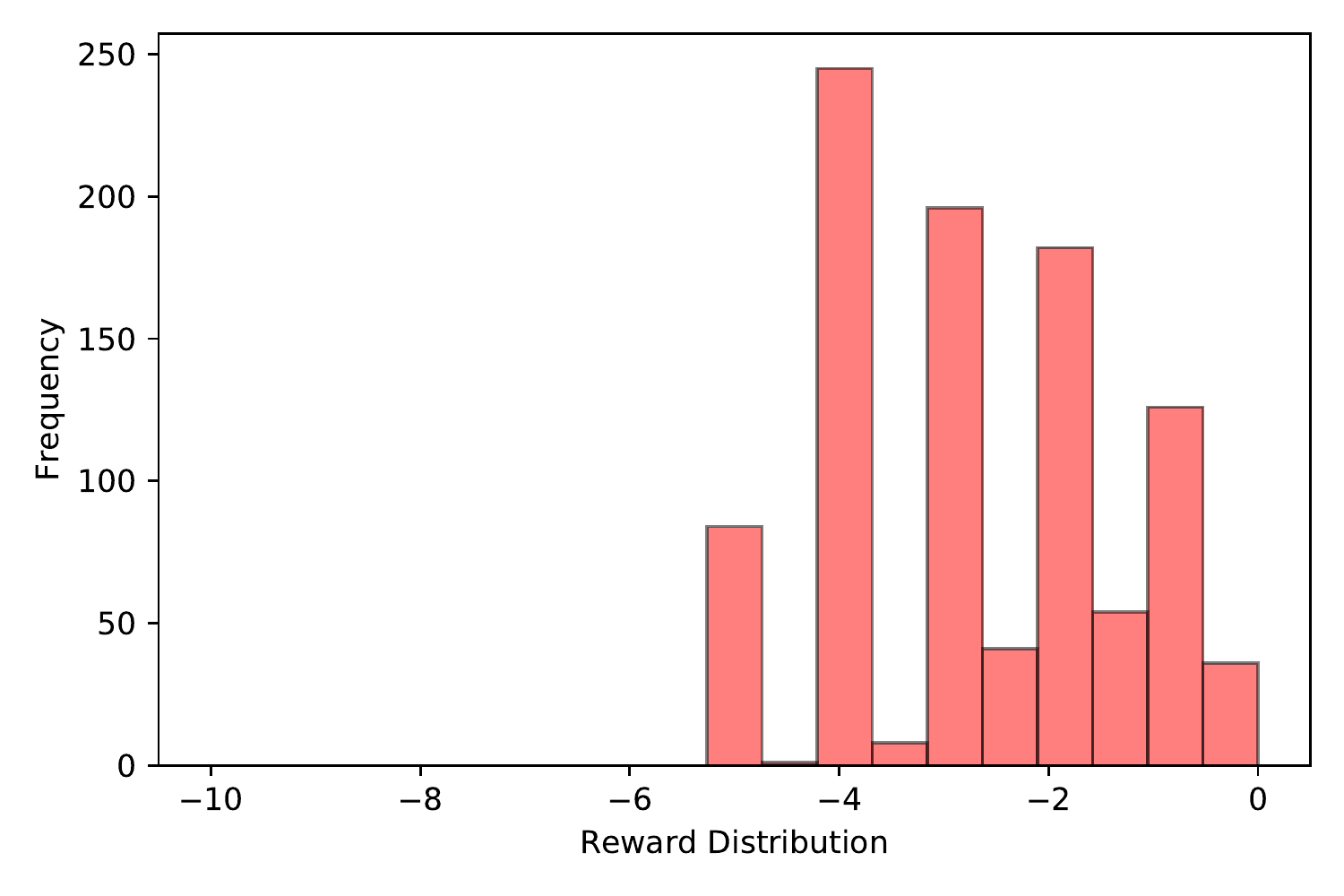} 
\includegraphics[scale=0.5]{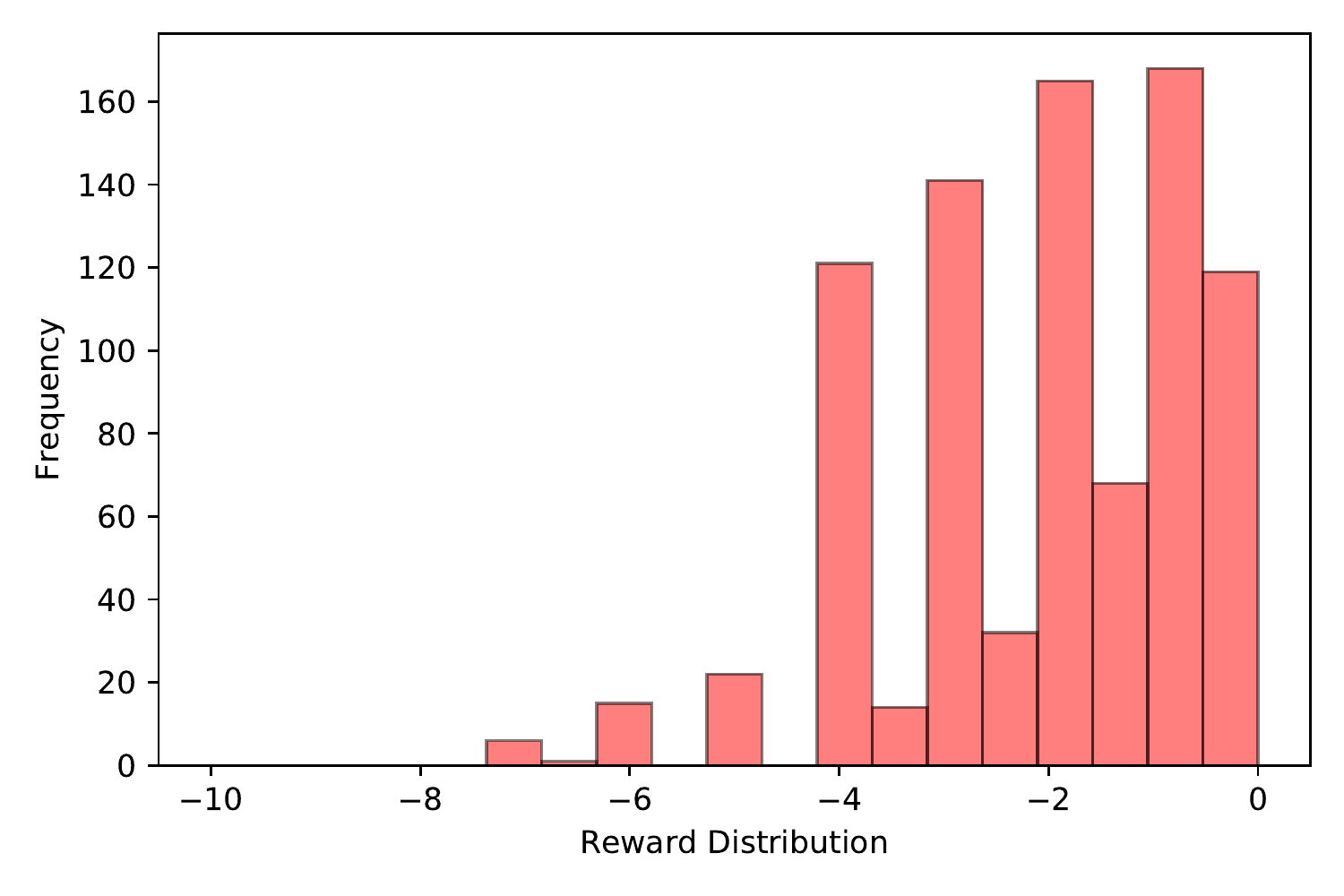} 
\includegraphics[scale=0.5]{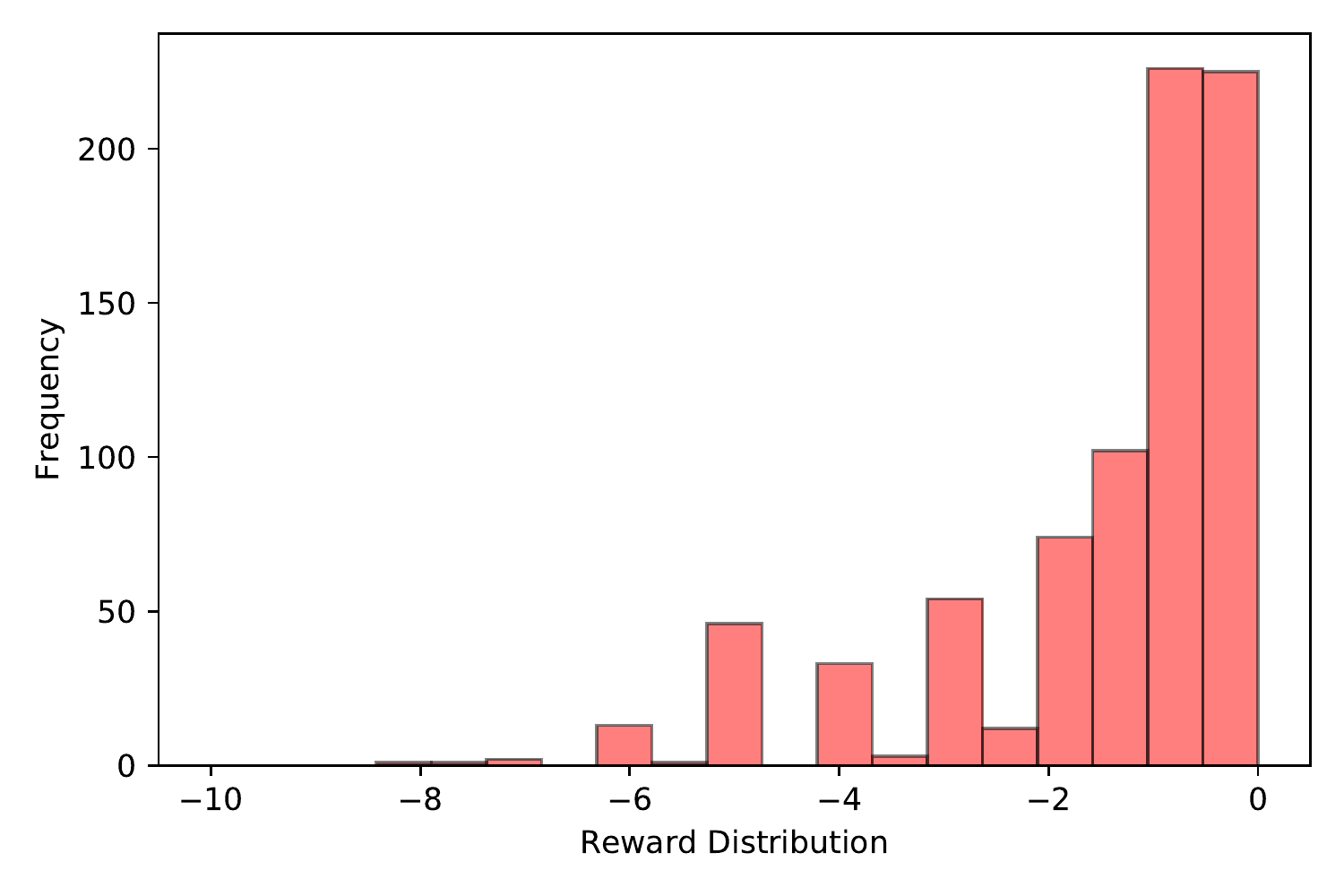} 
\caption{The distribution of reward for conservative strategies with $c=0.001$, $c=0.4$, and imputation strategy, respectively (from top to down).}
\label{fig:appedix:fig}
\end{figure}
\subsubsection{IHDP}
 We used both SPVAE and CPVAE on this dataset, for both models we used 20\% of the whole dataset as our training data. Missingness was injected to the dataset by assuming MCAR mechanism at different level of missingness, i.e. 10\%, 30\% and 50\%.

We used 5 dimensional feature mapping $h$ parameterized by a single fully connected network with ReLU activations, and 10 dimensional ID $\mathbf{e_i}$ for each variable. The Gaussian latent variable $z$ is set to a 10 dimensional diagonal vector. The inference net is a $h$-20-20-20 fully connected network with ReLU activations. The generator net is a $z$-20-20-D (where D is the observed feature dimension of IHDP dataset) fully connected network with ReLU activations. We trained the PVAE using the Adam optimizer with its default hyperparameter settings, a learning rate of 0.001 and batch size of 8. The network was trained for 25 epochs each time and the entire procedure is repeated 100 times for each missingness level.

For CPVAE, we used the same architecture as SPVAE, and the training objective is to reproduce all the attributes with their corresponding rewards given the missing attributes and the action taken. The only difference is that one-hot encoded action was added as the input of encoder and also as an input to the decoder, similar to what we described for MNIST dataset.
\subsubsection{Hyperparameters}
Here in Table \ref{MNIST-hyper}, we show that the dependence of our result for IHDP to hyperparameters is insignificant and our method consistently outperform competitors regardless of choice of hyperparameters. Here we consider several combinations of latent dimension (LD), batch size (BS), and value $K$ in the structure of PVAE which is output of the layer which encodes input variables and feed it to the first encoder.
\begin{table}[t!]
\caption{Estimated ATE for different hyperparameter setting  with $p_{miss}=0.5$. We change latent variable dim., batch size, and variable K in PVAE encoder.\vspace{1mm}}
\centering
\label{MNIST-hyper}
\begin{tabular}{cc} 
\toprule
Hyperparameters &  ATE    \\
\midrule   
LD8,BS8,K20 &$[0.1135207]$\\
LD10,BS8,K5 &$[0.11411935]$\\
LD10,BS8,K10 &$[0.11438434]$\\
LD5,BS8,K10 & $[0.1149257]$\\
LD5,BS8,K5 & $[0.11494357]$\\
LD5,BS4,K20 & $[0.11637331]$ \\ 
LD10,BS2,K10 & $[0.1164837]$ \\
LD10,BS8,K20 & $[0.11746769]$ \\
LD5,BS2,K20 & $[0.11749744]$\\
LD10,BS2,K20 & $[0.11766106]$\\
LD8,BS8,K5 & $[0.11910977]$\\
LD8,BS4,K20 & $[0.11929849]$\\
LD8,BS4,K10 & $[0.12004589]$\\
LD5,BS4,K10 & $[0.12054679]$\\
LD5,BS2,K5 & $[0.12068875]$\\
\bottomrule 
\end{tabular}
\vspace{-5pt}
\end{table}
\subsubsection{OhioT1DM}
The OhioT1DM dataset contains 8 weeks information about 6 individuals with Diabetes 1 in a time series format. Since in the original dataset only the response to the actual dose that was administered exists, it is not possible to evaluate recommendation methods directly using dataset. Thus, we use a simulator suggested in \cite{turgay2020exploiting}, that is trained on the actual data to estimate the response to a bolus injection. The simulator maps a pair of context and action to the mean of continuous glucose monitoring (CGM). From CGM the reward can be calculated according to equation (16) in the paper. A Gradient Boosting regression model with Huber loss is used to achieve this. In the model, 100 trees of maximum depth of 5 is used as weak learner. Furthermore a multivariate Gaussian distribution is fitted to approximate patients features. For producing dataset, we first sample from this distribution to get the features, then we choose an action for this feature and then feed the pair of action and feature to the simulator to produce the output. Then we randomly remove $30\%$ of features and store the triple of feature (with missing values), action, and the simulated reward in the dataset. We have produced 5000 samples for training. The way we produce actions is to train a simple logistic regression model to learn the action that is prescribed in the original dataset, and use this model to produce actions. In order to guarantee the second condition of the assumptions in Section \ref{sec:problem} (i.e. $\pi(a|\xt)>0$), we choose the action half of times from the action generator (logistic regression model) and for the other half we randomly choose one of the 10 actions. In the test time, we sample from Gaussian distribution to get the features, then randomly remove $30\%$ of the features and feed it to our method to get the recommended action. Then we feed the complete feature and action to the simulator to get the reward. We refer to \cite{turgay2020exploiting} for more details about the data generation mechanism. We chose the following hyperparameters for the model: $K=8$, $e_i=5$, latent dimension is 5, and batch size is 8. A two layer encoder and a two layer decoder is used, with 10-10-5 and 5-10-10 nodes respectively.

\bibliography{ref}
\bibliographystyle{ieeetr}

%\section*{Acknowledgements}

% In the unusual situation where you want a paper to appear in the
% references without citing it in the main text, use \nocite

%%%%%%%%%%%%%%%%%%%%%%%%%%%%%%%%%%%%%%%%%%%%%%%%%%%%%%%%%%%%%%%%%%%%%%%%%%%%%%%
%%%%%%%%%%%%%%%%%%%%%%%%%%%%%%%%%%%%%%%%%%%%%%%%%%%%%%%%%%%%%%%%%%%%%%%%%%%%%%%

\end{document}